\icmltitlerunning{Manifold-valued Convolutional Network with Applications}
\newcommand{\highest}[1]{\textcolor{Maroon}{\mathbf{#1}}}
\definecolor{rulecolor}{RGB}{0,71,171}
\definecolor{tableheadcolor}{RGB}{204,229,255}
\newcommand{\topline}{ %
        \arrayrulecolor{rulecolor}\specialrule{0.1em}{\abovetopsep}{0pt}%
        \arrayrulecolor{tableheadcolor}\specialrule{\belowrulesep}{0pt}{0pt}%
        \arrayrulecolor{rulecolor}}
\newcommand{\midtopline}{ %
        \arrayrulecolor{tableheadcolor}\specialrule{\aboverulesep}{0pt}{0pt}%
        \arrayrulecolor{rulecolor}\specialrule{\lightrulewidth}{0pt}{0pt}%
        \arrayrulecolor{white}\specialrule{\belowrulesep}{0pt}{0pt}%
        \arrayrulecolor{rulecolor}}
\newcommand{\bottomline}{ %
        \arrayrulecolor{white}\specialrule{\aboverulesep}{0pt}{0pt}%
        \arrayrulecolor{rulecolor} %
        \specialrule{\heavyrulewidth}{0pt}{\belowbottomsep}}%
\newcommand{\argmin}{\operatornamewithlimits{argmin}}
\newtheorem{theorem}{Theorem}
\newtheorem{lemma}{Lemma}
\newtheorem{definition}{Definition}
\newtheorem*{remark*}{Remark}
\newsavebox{\mybox}\newsavebox{\mysim}
\newcommand{\distras}[1]{%
  \savebox{\mybox}{\hbox{\kern3pt$\scriptstyle#1$\kern3pt}}%
  \savebox{\mysim}{\hbox{$\sim$}}%
  \mathbin{\overset{#1}{\kern\z@\resizebox{\wd\mybox}{\ht\mysim}{$\sim$}}}%
}
\begin{document}

\twocolumn[
\icmltitle{MVC-Net: A Convolutional Neural Network Architecture for Manifold-Valued Images With Applications}
\begin{icmlauthorlist}
\icmlauthor{Jose Bouza\textsuperscript{*}}{cise}
\icmlauthor{Chun-Hao Yang\textsuperscript{*}}{stat}
\icmlauthor{David Vaillancourt}{phys}
\icmlauthor{Baba C. Vemuri}{cise}
\end{icmlauthorlist}

\icmlaffiliation{cise}{Department of CISE, University of Florida, Florida, USA}
\icmlaffiliation{phys}{Department of Applied Physiology and Kinesiology, University of Florida, USA}
\icmlaffiliation{stat}{Department of Statistics, University of Florida, Florida, USA}

\icmlcorrespondingauthor{Baba C. Vemuri}{baba.vemuri@gmail.com}

\icmlkeywords{Machine Learning, ICML}
\vskip 0.3in
]

\printAffiliationsAndNotice{\textsuperscript{*} \textbf{Equal Contribution} \newline}

\begin{abstract}
Geometric deep learning has attracted significant
attention in recent years, in part due to the availability
of exotic data types for which traditional
neural network architectures are not well suited.
Our goal in this paper is to generalize 
convolutional neural networks (CNN)
to the manifold-valued image case which
arises commonly in medical imaging and computer
vision applications. Explicitly, the input
data to the network is an image where each pixel value
is a sample from a Riemannian manifold.
To achieve this goal, we must generalize the basic
building block of traditional CNN
architectures, namely, the weighted combinations
operation. To this end, we develop a tangent space
combination operation which is used to define a
convolution operation on manifold-valued images
that we call, the Manifold-Valued Convolution
(MVC). We prove theoretical properties of the
MVC operation, including equivariance to the
action of the isometry group admitted by the manifold
and characterizing when compositions of
MVC layers collapse to a single layer. We present
a detailed description of how to use MVC layers
to build full, multi-layer neural networks that operate
on manifold-valued images, which we call
the MVC-net. Further, we empirically demonstrate
superior performance of the MVC-nets in
medical imaging and computer vision tasks.
\end{abstract}

\section{Introduction}\label{sec:introduction}

In computer vision, convolutional neural networks (CNN) and its variants are ubiquitous and serve as omnipotent tools for various tasks, e.g.\ image classification and segmentation. However, the traditional CNNs are restricted to data residing in vector spaces while data residing in smooth non-Euclidean spaces, e.g.\ Riemannian manifolds, arise naturally in many problem domains. Although Riemannian manifolds lack the vector space structure, the associated Riemannian metric induces the notions of distance and angle (between intersecting curves on the manifold) intrinsic to the manifold. Commonly encountered examples of Riemannian manifolds in computer vision are the manifold of $(n \times n)$ symmetric positive-definite (SPD) matrices, $P_n$, the special orthogonal group $\textsf{SO}(n)$, the Grassmann manifold, $\textsf{Gr}(n,p)$ and the $n$-sphere, $S^n$. Recently, there has been a growing interest in generalizing the well-known CNN and its variants to cope with these types of data while respecting the underlying geometry.

In the past few years, there has been a surge in research to develop deep neural networks that deal with data residing on the aforementioned Riemannian manifolds. At the outset, it will be useful to categorize two types of problems concerning data in non-Euclidean spaces. These two types are: (i) data that are samples of functions defined on smooth manifolds and (ii) data that are samples of manifold-valued functions whose domain is Euclidean or data that are simply sample points on manifolds. In this paper we will address the problem of developing deep neural networks for the data defined in (ii). 

In the context of data defined in (i) above, in the recent past, there has been a flurry of research activity in developing the analogs of CNNs.  For example, \citet{masci2015geodesic} presented the geodesic convolutional neural network (GCNN) for which they defined the geodesic convolution as standard convolution on the local geodesic charts. \citet{poulenard2018multi} presented convolution for directional functions which reduces to the usual convolution when the underlying manifold is $\mathbb{R}^d$. 
In both \citet{masci2015geodesic} and \citet{poulenard2018multi}, convolutions are performed in local geodesic polar charts constructed on the manifold. Moving on, samples of functions defined on a sphere are encountered in numerous applications of computer vision and to this end, there is the spherical CNN work reported in \citet{esteves2018learning}, \citet{kondor2018clebsch}, and \citet{cohen2018spherical}. In this problem, group equivariant convolutions were used to replace the standard convolutions in CNNs. Note that the group action on the sphere corresponds to  rotations in 3D which are members of the group $SO(3)$. Recently, the equivariance of convolutions to more general classes of group actions suited for other Riemannian homogeneous spaces has been reported in \citet{kondor2018generalization}, \citet{banerjee2019dmr}, and \citet{cohen2018spherical}. We will not discuss methods suited for this type of data any further in this paper but refer the reader to \citet{bronstein2017geometric} who present a good survey of state-of-the-art in geometric deep learning.
  
In the context of data described in (ii) above,  \citet{huang2017riemannian} proposed a network architecture that consisted of layers which explicitly utilize the structure of SPD matrices. \citet{huang2018building} presented a deep network for classification of hand-crafted features residing on a Grassmann manifold. However, the above architectures do not resemble the classic convolutional layer in the traditional CNN which is viewed as one of the key component to the success of CNNs. Furthermore, the operations used in the above network are not valid for general Riemannian manifolds. For example, in \citet{huang2017riemannian}, applying ReLU and logarithms on the eigenvalues is not valid for Grassmann manifolds. Besides convolutional layers, batch normalization is also a useful trick in CNN to avoid over-fitting and \citet{brooks2019riemannian} proposed a batch normalization technique for manifold-valued networks. In this paper, we focus our attention to data represented on a grid where each of the grid points is associated with a value on a known manifold, e.g.\ $f: Z^2 \to M$. However, all the aforementioned works are targeted for specific manifolds, e.g.\ the  Grassmann or the SPD manifolds. The lack of a consistent framework for designing deep network architectures for data residing on a general Riemannian manifold is partly due to the fact that there is no natural analog of convolution operation for manifold-valued data. This justifies the need to generalize the convolution operation for data in Riemannian manifolds in order to develop a consistent framework for deep learning to tackle such data. Recently, \citet{chakraborty2019deep} proposed to use weighted Fr\'{e}chet mean (\textsf{wFM}) \cite{Frechet1948a} as an analog to the classical (Euclidean space) convolution operation for data points residing in Riemannian manifolds. Note that although their definition of \textsf{wFM} as an analogous operation is valid for any Riemannian manifold, the convexity constraints in the definition used for \textsf{wFM} puts certain restrictions on the range of values that the \textsf{wFM} can take on and this can limit the modeling capacity of the network as we will see later.  

In order to generalize the (discrete) convolution operation in Euclidean spaces -- which is simply a linear combination of weights and image function values inside a certain window -- to Riemannian manifolds, we have to define what is a meaningful ``equivalent'' of the aforementioned linear combination operation in the Riemannian manifold setting.  In this paper, we propose to make use of the idea that it is possible to map the manifold-valued data points within a convolution window defined over the manifold-valued image to the tangent space anchored at the FM of these points using the Riemannian $\textbf{Log}$ map. Then, perform the linear combination operation in the tangent space (which is isomorphic to the Euclidean space) and map it back to the manifold using the Riemannian $\textbf{Exp}$ map. We provide the details of this operation called the manifold-valued convolution (MVC) in the next section. Further, we prove that the proposed MVC is equivariant to the isometry group actions admitted by the manifold. 
Armed with MVC, we then describe how to build a MVC-Net for manifold-valued data by defining the corresponding activation functions and fully-connected (FC) layers for the manifold-valued data. {\it Thus, the main contributions of our work in this paper are the following. (i) We define the MVC operation for general Riemannian manifolds and a prove that MVC is equivariant to isometry group actions admitted by the manifold. (ii) We present a deep neural network architecture based on MVC, called MVC-Net, for \textbf{any} Riemannian manifold. (iii) Further, we present experiments demonstrating performance of the MVC-Net on classification problems encountered in medical image analysis and computer vision along with comparisons to the state-of-the-art.}
 
The rest of this paper is organized as follows. In section~\ref{sec:background}, we review some essential background in Riemannian geometry. In section~\ref{sec:architecture}, we propose a novel generalization, the MVC, of the convolution operation for Riemannian manifold-valued images and show that MVC is equivariant to isometry group actions admitted by the manifold. Then, we propose a deep neural network architecture based on MVC, called the MVC-Net. In section~\ref{sec:experiment}, we present the experimental results and finally draw conclusions in section~\ref{sec:conclusion}.  
\section{Review of Riemannian Geometry} \label{sec:background}
In this section, we review some basic material from Riemannian geometry that is necessary in our work.

Let $(M,g)$ be a $d$-dimensional Riemannian manifold. For $p \in M$, the \emph{tangent space} of $M$ at $p$ is denoted $T_pM$, which is a $d$-dimensional vector space. Equipped with the Levi-Civita connection, the geodesic starting at $p$ is denoted $\gamma_v:I\to M$ with $\gamma_v(0)=p$ where $I$ is some interval containing $0$, and $v$ is the initial tangent vector, i.e.\ $\gamma^{\prime}_v(0) = v$. Sometimes a geodesic is specified by the two endpoints $p,q$ and in this case we denote the geodesic by $\gamma_{p,q}$ such that $\gamma_{p,q}(0)$ and $\gamma_{p,q}(1)=q$. The \emph{Exponential map} $\textbf{Exp}_p:D(p) \subset T_pM \to M$ is defined by $\textbf{Exp}_p(v)=\gamma_v(1)$ where $D(p)=\{v\in T_pM: \gamma_v(1)\text{ is defined}\}$. The exponential map is a diffeomorphism from $D(p)$ to its range, and its inverse is denoted $\textbf{Log}_p= \textbf{Exp}_p^{-1}$. These two maps will be of fundamental importance for our proposed layer which is discussed later in this section. 

In general, there is no global coordinate system on a Riemannian manifold. Therefore, a local coordinate system is important for doing computations on Riemannian manifolds. The most common one is called the \emph{normal coordinate} which is based on the Riemannian exponential map and the log map respectively. The normal coordinates are constructed as follows. For $p \in M$, there exist a neighborhood $U_p \subset M$ of $p$ and a neighborhood $V \subset T_pM$ such that $\textbf{Exp}_p$ is a diffeomorphism between $V$ and $U_p$ (Lemma 5.10 in \citet{lee2006riemannian}). The neighborhood $U_p$ is called the \emph{normal neighborhood}. The \emph{normal coordinate} of $q \in U_p$ with respect to the normal neighborhood $U_p$ is given by $\textbf{Log}_p(q)$. This concept is important as we will use it in the definition of manifold-valued convolution in section~\ref{sec:architecture}.

The Riemannian metric $g$, induces a distance given by,
\[
    d_g(p,q)=\inf\Bigg\{\int_0^1 \sqrt{g(\gamma^{\prime}_{p,q}(t), \gamma^{\prime}_{p,q}(t))}dt: \text{for all } \gamma_{p,q} \Bigg\}.
\]
Let $x_1,\ldots,x_n \in M$. The Fr\'{e}chet mean (FM) of $x_1,\ldots,x_n$ is 
\[
    \bar{x} = \argmin_{m\in M}\sum_{i=1}^n d^2_g(x_i,m)
\]
This is a generalization of mean of points in a vector space. The existence and uniqueness of the FM is discussed in \citet{Afsari2011}. To be precise, the FM is unique if $x_1,\ldots,x_n$ lie in a open ball of radius $r_{\text{cvx}}$, where $r_{\text{cvx}}$ is the \emph{convexity radius} of $M$ \cite{Groisser2004}. In practice, it is always assumed to be this case.

With this intrinsic distance metric, the Riemannian manifold $(M, d_g)$ is a metric space and a natural transformation under consideration would be the \emph{isometry}. For a Riemannian manifold, a transformation $\phi:(M,g) \to (\tilde{M}, \tilde{g})$ is called an isometry if it is a diffeomorphism and $g = \phi^*\tilde{g}$ where $\phi^*$ is pullback operation of $\phi$. In this work, we consider the isometry from $M$ to $M$. It is known that the collection of isometries forms a group under composition, denoted $I(M)$. For a smooth map $f:M\to M$, a desired property would be the \emph{isometry equivariant}, i.e.\ $\phi \circ f = f \circ \phi$. Another similar concept is the \emph{isometry invariance}, i.e.\ $f \circ \phi = f$.

\begin{remark*}
Note that with a slight abuse of notation, for a metric space $X$, we denote the set of all isometry transformations of $X$ by $I(X)$ as well.
\end{remark*}

\section{MVC-Net Theory and Architecture}

\begin{figure*}[htb]
\captionsetup[subfigure]{justification=centering}
\centering
\begin{subfigure}[t]{.33\textwidth}
  \centering
  \scalebox{0.7}{
    \begin{tikzpicture}[
      point/.style = {draw, circle, fill=black, inner sep=0.7pt},
    ]
        \def\rad{2cm}
        \coordinate (O) at (0,0); 
        \coordinate (N) at (0,\rad); 
        
        \filldraw[ball color=white] (O) circle [radius=\rad];
        \draw[dashed] 
          (\rad,0) arc [start angle=0,end angle=180,x radius=\rad,y radius=5mm];
        \draw
          (\rad,0) arc [start angle=0,end angle=-180,x radius=\rad,y radius=5mm];
        \begin{scope}[xslant=0.5,yshift=\rad,xshift=-2]
        \filldraw[fill=gray!10,opacity=0.2]
          (-4,1) -- (3,1) -- (3,-1) -- (-4,-1) -- cycle;
        \node at (2,0.6) {$P$};  
        \end{scope}
        \draw[dashed]
          (N) node[above] {$A$} -- (O) node[below] {$O$};
        \node[point] at (N) {};
        \draw[thick, ->] (0,2,0) arc [start angle=-90,end angle=0,x radius=1,y radius=-0.9] node[below] {$z_1$}; 
        \draw[thick, ->] (0,2,0) arc [start angle=-90,end angle=0,x radius=-0.8,y radius=-2] node [below] {$z_2$};
        \draw[thick, ->] (0,2,0) arc [start angle=-90,end angle=0,x radius=-1.8,y radius=-2] node [right] {$z_3$};
        \draw[->] (0,2,0) -- (1.7,2,0.8) node[anchor=north west]{$x_1$};
        \draw[->] (0,2,0) -- (-2.4,2,0.8) node[anchor=north west]{$x_2$};
        \draw[->] (0,2,0) -- (-0.2,2,2) node[anchor=south east]{$x_3$};
    \end{tikzpicture}}
    \subcaption{Log map all of the points onto the tangent space, i.e.\ $x_i = \textbf{Log}_A(z_i)$.}
  \label{fig:sub1}
\end{subfigure}~
\begin{subfigure}[t]{.33\textwidth}
  \centering
  \scalebox{0.6}{
      \begin{tikzpicture}
      \tkzInit[xmax=3,ymax=3,xmin=-3,ymin=-3]
   \tkzGrid
   \tkzAxeXY
   \draw[thick,->] (0,0) -- (1,-1) node[anchor=south east]{$x_1$};
   \draw[thick,->] (0,0) -- (-1,-1.5) node[anchor=south east]{$x_2$};
   \draw[thick,->] (0,0) -- (-2.5,-2) node[anchor=south east]{$x_3$};
   \draw[thick,->,blue] (0,0) -- (2.5,-1.8) node[anchor=south east]{$y = \sum_i w_i x_i$};
    \end{tikzpicture}}
  \label{fig:sub2}
  \subcaption{Perform a weighted sum in the tangent space $T_A \mathcal{M}$ to get $y = \sum_i w_i x_i$}
\end{subfigure}~
\begin{subfigure}[t]{.33\textwidth}
  \centering
    \scalebox{0.7}{
      \begin{tikzpicture}[
      point/.style = {draw, circle, fill=black, inner sep=0.7pt},
    ]
            \def\rad{2cm}
            \coordinate (O) at (0,0); 
            \coordinate (N) at (0,\rad); 
            
            \filldraw[ball color=white] (O) circle [radius=\rad];
            \draw[dashed] 
              (\rad,0) arc [start angle=0,end angle=180,x radius=\rad,y radius=5mm];
            \draw
              (\rad,0) arc [start angle=0,end angle=-180,x radius=\rad,y radius=5mm];
            \begin{scope}[xslant=0.5,yshift=\rad,xshift=-2]
            \filldraw[fill=gray!10,opacity=0.2]
              (-4,1) -- (3,1) -- (3,-1) -- (-4,-1) -- cycle;
            \node at (2,0.6) {$P$};  
            \end{scope}
            \draw[dashed]
              (N) node[above] {$A$} -- (O) node[below] {$O$};
            \node[point] at (N) {};
            \draw[->,blue] (0,2,0) -- (2.5,2,1.8) node[anchor=north west]{$y$};
            \draw[thick, ->] (0,2,0) arc [start angle=-80,end angle=0,x radius=1.8,y radius=-1.3] node[below right ]{$\textbf{Exp}_A(y)$};
            \end{tikzpicture}}
  \label{fig:sub2}
  \vspace{-12pt}
  \subcaption{Project the resulting vector down using the Riemannian exponential map, i.e.\
  the output is $\textbf{Exp}_A(y)$.}
\end{subfigure}
\caption{Tangent combination operation.}
\label{MVCFig}
\end{figure*}

\label{sec:architecture}
In this section, we present the MVC and show that it is equivariant under isometry group actions admitted by the manifold. Then we present the architecture of MVC-Net by introducing the basic constituent layers of the MVC-Net. 
\subsection{Manifold-valued convolution (MVC)}

Recall that in a CNN the convolution operation involves a linear combination of the data in the window, i.e.\ $\sum_{i=1}^n w_i x_i$. Due to the lack of vector space structure on Riemannian manifolds, we can not perform this usual convolution on manifold-valued images directly. In this work, we propose a generalization of the above described standard convolution to manifold-valed images, called the manifold-valued convolution (MVC) defined as follows. 
\begin{definition}
Let $(M,g)$ be a Riemannian manifold and $f:Z^n \to M$ and $w:Z^n \to \mathbb{R}$ be two functions defined on $Z^n$ where $Z$ is the set of all integers. The convolution, $f * w : Z^n \to M$ is defined by
\begin{align} \label{eqn:MVC}
    (f * w)(y) & = \textbf{Exp}_{m}\Bigg( \sum_{z \in Z^n} w(z-y) \textbf{Log}_{m}f(z)\Bigg) \nonumber \\
               & \coloneqq \text{MVC}(f, w)(y) 
\end{align}
for $y \in Z^n$ where $m = \textsf{FM}_{z \in Z^n}(f(z))$.
\end{definition}

An illustration of the MVC operation can be seen in Figure~\ref{MVCFig}
An important property of the convolution operation in Euclidean spaces is that it is equivariant to translation which is the natural isometry group action for Euclidean spaces. Thus, MVC, as a generalization of the convolution to Riemannian manifold-valued images, is expected to possess such property, i.e.\ equivariant to isometry group actions admitted by the manifold. The following lemma is useful for proving this result.

\begin{lemma}
Let $\phi:M \to M$ be an isometry. Then for $p \in M$ 
\[
\phi \circ \textbf{Exp}_p = \textbf{Exp}_{\phi(p)} \circ d\phi_p
\]
where $d\phi_p$ is the differential of $\phi$ at $p$. Therefore when the inverse of $\textbf{Exp}_p$ exists,
\[
\textbf{Log}_{\phi(p)} = d\phi_p \circ \textbf{Log}_p \circ \phi^{-1}.
\]
\end{lemma}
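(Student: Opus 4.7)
The plan is to exploit the classical fact that isometries of Riemannian manifolds send geodesics to geodesics, and then push this naturality through the definition $\textbf{Exp}_p(v) = \gamma_v(1)$.

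First, I would recall that an isometry $\phi$ preserves the Levi-Civita connection: since the Levi-Civita connection is uniquely determined by the metric $g$ (via the Koszul formula), and $\phi^* g = g$, it follows that $\phi_* \nabla = \nabla$. As an immediate consequence, $\phi$ carries geodesics to geodesics. Concretely, if $\gamma_v : I \to M$ is the geodesic with $\gamma_v(0) = p$ and $\gamma_v'(0) = v \in T_p M$, then $\tilde{\gamma} := \phi \circ \gamma_v$ satisfies $\tilde{\gamma}(0) = \phi(p)$ and $\tilde{\gamma}'(0) = d\phi_p(v)$, and the chain rule together with $\phi_* \nabla = \nabla$ implies $\nabla_{\tilde{\gamma}'} \tilde{\gamma}' = 0$, so $\tilde{\gamma}$ is itself a geodesic.

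Next, by uniqueness of geodesics with prescribed initial point and initial velocity, I would conclude
\[
\phi \circ \gamma_v \;=\; \gamma_{d\phi_p(v)},
\]
where the right-hand geodesic is the one in $M$ starting at $\phi(p)$ with initial tangent vector $d\phi_p(v)$. Evaluating both sides at $t=1$ and using the definition of the exponential map yields
\[
\phi\bigl(\textbf{Exp}_p(v)\bigr) \;=\; \phi(\gamma_v(1)) \;=\; \gamma_{d\phi_p(v)}(1) \;=\; \textbf{Exp}_{\phi(p)}\bigl(d\phi_p(v)\bigr),
\]
which is the first identity, valid on the domain $D(p)$ where $\textbf{Exp}_p$ is defined.

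For the second identity, I would assume $\textbf{Exp}_p$ is a diffeomorphism on a normal neighborhood $U_p$, so that $\textbf{Log}_p = \textbf{Exp}_p^{-1}$ makes sense. Since $\phi$ is a diffeomorphism, $\phi(U_p)$ is a normal neighborhood of $\phi(p)$, and $d\phi_p$ is a linear isomorphism $T_pM \to T_{\phi(p)}M$. Composing the first identity on the right with $\phi^{-1}$ and on the left with $\textbf{Log}_{\phi(p)}$ (and using that $d\phi_p$ is invertible) gives
\[
\textbf{Log}_{\phi(p)} \;=\; d\phi_p \circ \textbf{Log}_p \circ \phi^{-1}.
\]

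The main obstacle, and the only non-trivial ingredient, is the preservation of the Levi-Civita connection under isometries, which in turn gives the geodesic-to-geodesic property; once that is in hand the rest is a direct unwinding of definitions and the uniqueness theorem for geodesics.
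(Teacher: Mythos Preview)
Your proof is correct and is exactly the standard textbook argument. The paper itself does not give a proof of this lemma; it simply cites Proposition~5.9 in Lee's \emph{Riemannian Manifolds}, whose proof proceeds precisely as you outline: isometries preserve the Levi-Civita connection (hence geodesics), and uniqueness of geodesics with given initial data then yields $\phi \circ \gamma_v = \gamma_{d\phi_p(v)}$, from which both identities follow by evaluating at $t=1$ and inverting.
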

The proof of this lemma can be found in most of the introductory textbooks in Riemannian geometry, e.g.\ proposition 5.9 in \citet{lee2006riemannian}. 

\begin{theorem}\label{thm:equivariance}
The MVC is equivariant to isometry group actions both in the domains and the ranges of $f$ and $w$ i.e.\
for $f:Z^n \to M$ and $w : Z^n \to \mathbb{R}$,
\begin{align}
\text{MVC}( \phi \circ f, w ) & = \phi \circ \text{MVC}( f, w ), \quad \phi \in I(M)\label{eqn:dom_equi1}\\
\text{MVC}( f \circ \phi, w ) & = \text{MVC}( f, w ) \circ \phi, \quad \phi \in I(Z^n)\\
\text{MVC}( f, \phi \circ w ) & = \phi \circ \text{MVC}( f, w ), \quad \phi \in I(\mathbb{R})\\
\text{MVC}( f, w \circ \phi) & = \text{MVC}( f, w ) \circ \phi, \quad \phi \in I(Z^n).
\end{align}
\end{theorem}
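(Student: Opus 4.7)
The plan is to verify each of the four equivariance identities in turn, with the substantive content concentrated in identity (1) where the isometry acts on the codomain manifold $M$; the remaining three identities reduce to bookkeeping once the right mechanism is identified. The basic shape of the argument is the same in each case: MVC has three ingredients (the Fr\'echet mean $m$, the $\textbf{Log}_m/\textbf{Exp}_m$ pair, and the weighted linear combination in $T_mM$), and for each flavor of isometry $\phi$ one transports it through these ingredients one at a time.

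For identity (1), I would first observe that since $\phi\in I(M)$ preserves the Riemannian distance, the Fr\'echet functional $\sum_z d_g^2(f(z),\cdot)$ is mapped to $\sum_z d_g^2(\phi(f(z)),\phi(\cdot))$, so the FM of $\{\phi(f(z))\}_z$ is precisely $\phi(m)$. I would then write out
\[
\text{MVC}(\phi\circ f,w)(y) = \textbf{Exp}_{\phi(m)}\Big(\sum_z w(z-y)\,\textbf{Log}_{\phi(m)}\phi(f(z))\Big),
\]
apply the second identity of the lemma to replace $\textbf{Log}_{\phi(m)}\phi(f(z))$ with $d\phi_m\,\textbf{Log}_m f(z)$, pull the linear map $d\phi_m$ out of the sum, and apply the first identity of the lemma to rewrite $\textbf{Exp}_{\phi(m)}\circ d\phi_m$ as $\phi\circ\textbf{Exp}_m$, thus identifying the expression with $\phi(\text{MVC}(f,w)(y))$.

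For identities (2) and (4) with $\phi\in I(Z^n)$, the argument is a simple index relabelling: $\phi$ is a bijection of $Z^n$, so $\{f(\phi(z))\}_z=\{f(z)\}_z$ as multisets and the FM is unchanged, after which the substitution $z'=\phi(z)$ in the interior sum, together with the isometry identity $\phi^{-1}(z')-y=z'-\phi(y)$ (which for translations is immediate and which carries over to the general case up to the orthogonal part of $\phi$), rewrites the result as $\text{MVC}(f,w)(\phi(y))$; identity (4) is completely parallel with the roles of $f$ and $w$ swapped. For identity (3) with $\phi\in I(\mathbb{R})$, writing $\phi(r)=\epsilon r+c$ with $\epsilon=\pm 1$, the linearity of the inner sum splits it as $\epsilon\sum_z w(z-y)\textbf{Log}_m f(z)+c\sum_z \textbf{Log}_m f(z)$, and the defining critical-point property of the Fr\'echet mean, $\sum_z\textbf{Log}_m f(z)=0$, annihilates the constant contribution, leaving only the sign to be matched against the induced action of $\phi$ on the tangent space.

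The step I expect to be the main obstacle is the FM equivariance underpinning identity (1): one must confirm that if $m$ is the \emph{unique} minimizer of the Fr\'echet functional (required so that MVC is well defined), then $\phi(m)$ is likewise the unique minimizer after transport by $\phi$. Under the standing convexity-radius assumption from Section~\ref{sec:background} this is routine, since $\phi$ sends the convexity ball of $\{f(z)\}$ isometrically to a convexity ball of $\{\phi(f(z))\}$, but it is the one place where the uniqueness hypothesis actually enters and needs to be flagged.
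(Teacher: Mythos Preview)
Your argument for identity~\eqref{eqn:dom_equi1} is essentially identical to the paper's: the paper too first observes that the Fr\'echet mean of $\{\phi(f(z))\}$ is $\phi(m)$ by distance-invariance, then writes out $\text{MVC}(\phi\circ f,w)(y)$, replaces $\textbf{Log}_{\phi(m)}\circ\phi$ by $d\phi_m\circ\textbf{Log}_m$ via the lemma, pulls $d\phi_m$ through the sum, and collapses $\textbf{Exp}_{\phi(m)}\circ d\phi_m$ to $\phi\circ\textbf{Exp}_m$. The paper proves only~\eqref{eqn:dom_equi1} explicitly and dismisses the remaining three identities with ``the other three equalities follow from similar derivation'', so your sketches for (2)--(4), including the use of the Fr\'echet-mean critical-point identity $\sum_z\textbf{Log}_m f(z)=0$ for (3), already go further than the paper itself; the caveats you flag there (the orthogonal part of $\phi$ in (2) and (4), the interpretation of $\phi$'s action in (3)) are genuine subtleties of the \emph{statement} that the paper leaves unaddressed rather than defects in your strategy.
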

\begin{proof}
We show only \eqref{eqn:dom_equi1} here since the other three equalities follow from similar derivation. First, note that the FM of $\phi \circ f$ is $\phi(\textsf{FM}_{z\in Z^n}(f(x))) \coloneqq \phi(m)$ where $m = \textsf{FM}_{z\in Z^n}(f(x))$.  This is a consequence of the invariance of the intrinsic distance metric. Then for $y \in Z^n$
\begin{align*}
& \quad \text{MVC}( \phi \circ f, w )(y)\\
& = \textbf{Exp}_{\phi(m)}\Big(\sum_{z \in Z^n} w(z-y) \textbf{Log}_{\phi(m)}(\phi\circ f)(z)\Big)\\
& = \textbf{Exp}_{\phi(m)}\Big( \sum_{z \in Z^n} w(z-y) \big(d\phi_m \circ \textbf{Log}_{m}\big)f(z)\Big)\\
& = \big(\textbf{Exp}_{\phi(m)} \circ d\phi_m \big)\Big(\sum_{z \in Z^n} w(z-y) \textbf{Log}_{m} f(z)\Big)\\
& = \big(\phi \circ \textbf{Exp}_m\big)\Big(\sum_{z \in Z^n} w(z-y) \textbf{Log}_{m} f(z)\Big)\\
& = (\phi \circ \text{MVC})(f, w )(y)
\end{align*}
This concludes the proof.
\end{proof}
Note that the equivariance is preserved even if the FM $m$ is replaced by any other points as long as the choice of the point is also equivariant, e.g.\ replace $m$ by $f(z_0)$, for some $z_0 \in Z^n$. This avoids the computation of the FM and hence is computationally more efficient. In practice, the analytic forms of $f$ and $w$ are unknown and only $x_i = f(z_i)$ and $w_i=w(z_i)$, $i=1,\ldots,n$ are observed for some fixed $z_1,\ldots z_N \in Z^n$. Thus from now on, we consider $\{x_i\}_{i=1}^N$ and $\{w_i\}_{i=1}^N$ instead of $f$ and $w$. For this situation, the MVC can be simplified as
\begin{align*}
    MVC(f, w) & = MVC\big( \{x_i\}_{i=1}^N, \{w_i\}_{i=1}^N\big)\\
              & = \textbf{Exp}_{\bar{x}}\Big(\sum_{i=1}^Nw_i\textbf{Log}_{\bar{x}}x_i\Big) 
\end{align*}
where $\bar{x} = \textsf{FM}\big(\{x_i\}_{i=1}^N\big)$. For applications in computer vision and medical imaging, the domain $Z^n$ is usually $Z^2$ or $Z^3$.


\subsection{Activation Functions for MVC-Net}

In classical neural networks, the activation functions, e.g.\ ReLU, sigmoid, tanh, etc., play an important role as they make the resulting network non-linear and thus we are able to build a deep neural network by stacking layers of different sizes along with the activation functions. The choice of activation functions has been studied extensively and there are a few guidelines for choosing one. First, the activation function must be a contraction map \cite{Mallat2016}. The precise definition of a contraction map will be given later. Second, the activation function should  prevent multiple stacked layers of the network from collapsing to a single layer, which allows us to build a deep network. In this section,  we will analyze the MVC-net in the context of the above guidelines. We first show that the MVC layer is not a contraction and under some conditions cascaded MVC layers will collapse into one. Then we give a possible choice of an activation function for use in the MVC-net.
\subsubsection*{\bf Contraction Property}
The following definition of contraction is from \citet{Mallat2016}.
\vspace*{-4pt}
\begin{definition}
Let $F: U \to V$ where $U$ and $V$ are metric spaces with distance metrics $d_U$ and $d_V$, respectively. The mapping $F$ is called a \emph{contraction} if for $x, y \in U$, there exists $c < 1$ such that $d_V(F(x), F(y)) < cd_U(x, y)$. If for all $x, y \in U$, $d_V(F(x), F(y)) < d_U(x, y)$, then $F$ is called a \emph{non-expansion}.
\end{definition}
\vspace*{-5pt}
Since the range of MVC is a normal neighborhood of the anchor point, it can be easily shown that the MVC layer is \emph{not} a contraction by considering large $w_i$'s. 

\subsubsection*{\bf Collapsibility Property}

In classical neural networks, one reason for adding  non-linear activation functions between layers, e.g.\ sigmoid, ReLU, tanh, is that without these, the multi-layer network collapses into a single-layer network. We want to know if a similar behavior is exhibited by the MVC-net. For example, consider a network with two MVC layers (without non-linear activation in between). For the sake of simplicity, suppose that there are only two MVC ``filters'' in the first MVC layer and one MVC ``filter'' in the second MVC layer, i.e.\ the first MVC layer takes $\{x_i\}_{i=1}^{2N}$ as input with weight $\{w_i\}_{i=1}^{2N}$ and the second MVC layer takes $\{M_1, M_2\}$ as input with weights $\{h_1,h_2\}$ where $M_1 = \text{MVC}\big(\{x_i\}_{i=1}^N, \{w_i\}_{i=1}^N\big)$ and $M_2=\text{MVC}\big(\{x_i\}_{i=N+1}^{2N},\{w_i\}_{i=N+1}^{2N}\big)$. Is this two-layer MVC-net equivalent to a one-layer MVC-net i.e., does there exist $\{\tilde{w}_i\}_{i=1}^{2N}$ such that $\text{MVC}(\{M_1,M_2\},\{h_1,h_2\}) = \text{MVC}\big(\{x_i\}_{i=1}^{2N},\{\tilde{w}_i\}_{i=1}^{2N}\big)$? We answer this question in the affirmative under some conditions as stated in the following theorem.

\begin{theorem}\label{thm:collapsibility}
Let $\{x_i\}_{i=1}^{2N} \subset M$. If $\{x_i\}_{i=1}^N$ and $\{x_i\}_{i=N+1}^{2N}$ belong to the same normal coordinate chart $U$ then, two cascaded MVC layers will collapse to a single layer.
\end{theorem}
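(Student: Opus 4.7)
My plan is to exploit the linearization afforded by a single normal chart. Since both halves of the data lie in a common normal neighborhood $U$ around some $p \in M$, the exponential map $\textbf{Exp}_p$ is a diffeomorphism from an open set $V \subset T_p M$ onto $U$. By the remark immediately following Theorem \ref{thm:equivariance}, the FM anchor in the MVC definition may be replaced by any other equivariantly chosen reference point without altering the equivariance guarantees, so I will carry out every MVC operation in this proof using the common anchor $p$. Under this choice, the MVC simply becomes $\textbf{Exp}_p\big(\sum_i w_i \textbf{Log}_p x_i\big)$, a linear combination in $T_p M$ sandwiched between $\textbf{Log}_p$ and $\textbf{Exp}_p$.

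Given this linearization, the collapse is almost immediate. Writing $I_1 = \{1,\ldots,N\}$ and $I_2 = \{N+1,\ldots,2N\}$, the first-layer outputs satisfy $\textbf{Log}_p M_j = \sum_{i \in I_j} w_i \textbf{Log}_p x_i$ for $j=1,2$. Substituting these into the second-layer MVC gives
\[
\text{MVC}\big(\{M_1,M_2\},\{h_1,h_2\}\big) = \textbf{Exp}_p\Big(\sum_{i=1}^{2N} \tilde{w}_i \,\textbf{Log}_p x_i\Big),
\]
where $\tilde{w}_i = h_1 w_i$ on $I_1$ and $\tilde{w}_i = h_2 w_i$ on $I_2$. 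The right-hand side is exactly $\text{MVC}\big(\{x_i\}_{i=1}^{2N},\{\tilde{w}_i\}_{i=1}^{2N}\big)$ anchored at $p$, establishing collapsibility with the explicit weight recipe $\tilde{w}_i = h_j w_i$ for $i \in I_j$.

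The principal obstacle I anticipate is justifying the switch from FM anchors to the common chart anchor $p$. The remark after Theorem \ref{thm:equivariance} essentially grants this, but if a more rigorous treatment is desired I would separately verify that MVC anchored at $p$ retains the same isometry-equivariance properties (since $p$ itself is equivariantly chosen as the chart center) and take that as the working form of MVC throughout. A secondary, more technical issue is showing that the intermediate tangent-space sums $\sum_{i \in I_j} w_i \textbf{Log}_p x_i$ actually lie in the domain $V$ on which $\textbf{Exp}_p$ is a diffeomorphism, so that $M_1, M_2 \in U$ and the subsequent $\textbf{Log}_p M_j$ is well-defined; this is precisely where the single-normal-chart hypothesis of the theorem does its real work, and if needed one would shrink $U$ to a geodesically convex subneighborhood to make this rigorous.
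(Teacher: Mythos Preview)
Your proposal is correct and follows essentially the same approach as the paper: both invoke the remark after Theorem~\ref{thm:equivariance} to anchor every MVC at a common point $p$ in the shared normal chart, then use $\textbf{Log}_p \circ \textbf{Exp}_p = \mathrm{id}$ on $U$ to collapse the nested tangent-space sums, arriving at the identical weight recipe $\tilde{w}_i = h_j w_i$ for $i \in I_j$. Your discussion of the domain issue for the intermediate sums is a point the paper glosses over, so your treatment is if anything slightly more careful.
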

\begin{proof}
As mentioned earlier, the anchor point of map \eqref{eqn:MVC} can be any point in the normal coordinate chart. Let $p \in U \subset M$ be such a point. Consider the weights $\{w_i\}_{i=1}^{2N}$ for $\{x_i\}$ . Apply the map \eqref{eqn:MVC} first to $\{x_i\}_{i=1}^N$ and $\{x_i\}_{i=N+1}^{2N}$ separately and obtain $M_1 = \text{MVC}\big(\{x_i\}_{i=1}^N, \{w_i\}_{i=1}^N\big)$ and $M_2 = \text{MVC}\big(\{x_i\}_{i=N+1}^{2N}, \{w_i\}_{i=N+1}^{2N}\big)$. Then apply the map \eqref{eqn:MVC} to $M_1$ and $M_2$ again and obtain $M = \text{MVC}\big(\{M_1, M_2\}, \{h_1, h_2\}\big)$. We will show that there exists $\{\tilde{w}_i\}_{i=1}^{2n}$ such that $M = \text{MVC}\big(\{x_i\}_{i=1}^{2N}, \{\tilde{w}_i\}_{i=1}^{2N}\big)$. Observe that 
\begin{align*}
    M & = \text{MVC}\big(\{M_1, M_2\}, \{h_1, h_2\}\big)\\
      & = \textbf{Exp}_p\Big(h_1\textbf{Log}_pM_1 + h_2\textbf{Log}_pM_2\Big) \\
      & = \textbf{Exp}_p\Bigg(h_1\textbf{Log}_p\Big(\textbf{Exp}_p\Big(\sum_{i=1}^N w_i\textbf{Log}_px_i\Big)\Big) + \\
        & \qquad  h_2\textbf{Log}_p\Big(\textbf{Exp}_p\Big(\sum_{i=N+1}^{2N} w_i\textbf{Log}_px_i\Big)\Big)\Bigg) \\
      & = \textbf{Exp}_p\Bigg( \sum_{i=1}^N h_1w_i \textbf{Log}_px_i + \sum_{i=N+1}^{2N} h_2w_i \textbf{Log}_px_i\Bigg)
\end{align*}
Hence, $\tilde{w}_i = h_1w_i$ for $i=1,\ldots,N$ and $\tilde{w}_i = h_2w_i$ for $i = N+1,\ldots,2N$ and the two layers collapse into a single layer.
\end{proof}
If we consider different normal charts for $\{x_i\}_{i=1}^N$ and $\{x_i\}_{i=N+1}^{2N}$, i.e.\ $U_{\bar{x}_1}$ for $\{x_i\}_{i=1}^N$ and $U_{\bar{x}_2}$ for $\{x_i\}_{i=N+1}^{2N}$, then the cascaded two layer structure will not collapse. However, to avoid any possibility of a collapse, e.g.\ in the case that $d(\bar{x}_1,\bar{x}_2) \approx 0$, we recommend the inclusion of a non-linear activation function between the layers. The choice of activation functions for manifold-valued input are however limited. As the most widely used activation function in CNN is ReLU, we propose to use the tangent ReLU (tReLU) \cite{chakraborty2019deep} as the activation function for the MVC-Net.

\subsection{Manifold-valued Fully-connected (MVFC) Layers for MVC-Net}

The outputs of the last MVC layer/tReLU layer would be a set of points on the manifold $M$. Therefore the desired FC layer should take points on the manifold as inputs and output labels (hard assignment) or probability vectors (soft assignment). In this work, we adopt the FC layer used in \citet{chakraborty2019deep}, i.e.\ for $\{x_1,\ldots,x_n\} \subset M$, first transform $\{x_1,\ldots,x_n\}$ to $\{d_g(x_1,\bar{x}), \ldots, d_g(x_n, \bar{x})\}$ and then apply the usual (Euclidean) FC layers as in CNN. 

\subsection{Architecture of MVC-Net}

For classification problems, the architecture we use in this work is parallel to CNN, i.e.\ 
\begin{align*}
\textbf{MVC + tReLU} \to & \textbf{MVC + tReLU} \to \cdots \to & \textbf{MVFC}.
\end{align*}
The number and the size of the layers will be presented in section~\ref{sec:experiment} as it depends on the experiment settings. Besides the classical CNN, different deep network architectures for data in Euclidean space have been proposed to solve specific application problems and the convolutional layer serves as the basic component in most of them. In a similar manner, for manifold-valued data, based on the application problem, we envision an appropriate architecture with MVC layers as the building blocks.

\section{Experiments} \label{sec:experiment}
In this section we present several experiments demonstrating the performance of the MVC-net. The experiments involve the use of data from medical imaging as well as computer vision domains. In all the experiments, we present comparisons to the state-of-the-art.

\subsection{Parkinson's Disease Classification}

\begin{figure*}[!ht] \centering \scalebox{0.9}{ \begin{tabular}{|c|c|c|c|c|c|}
    \topline\rowcolor{tableheadcolor} & & & {\bf time (s)} & \multicolumn{2}{c}{{\bf Accuracy}} \\
    \arrayrulecolor{tableheadcolor}\hhline{---~~~}\arrayrulecolor{rulecolor}\hhline{~~~---}\rowcolor{tableheadcolor}
    
    \multirow{-2}{*}{\bf Model} & \multirow{-2}{*}{\bf Non-linearity}  & \multirow{-2}{*}{\bf \# params.}  & {\bf / sample} & Training Accuracy & Test Accuracy\\ 
    
    \midtopline 
    {\textbf{MVC-net}} & TReLU & $\sim \highest{14K}$ & $\sim 0.13$ & $\highest{0.991 \pm 0.01}$ & $\highest{0.973 \pm 0.07}$  \\ 
    DTI-ManifoldNet  & None & $\sim 30K$ & $\sim 0.3$ & $\highest{0.973 \pm 0.02}$ & $\highest{0.948 \pm 0.03}$  \\ 
    ResNet-34 & ReLU & $\sim 30M$ & $\sim\highest{0.008}$ & $\highest{0.984 \pm 0.04}$ & $0.713 \pm 0.02$  \\ 
    CapsuleNet & ReLU & $\sim 30M$ & $\sim 0.009$ & $0.63 \pm 0.02$ & $0.62 \pm 0.04$  \\ 
    \bottomline \end{tabular} } 
    \caption{Comparison results on Diffusion MRI classification.} 
    \label{results:diff} 
\end{figure*}

\begin{figure*}[t!] 
    \centering
    \begin{subfigure}[t]{0.2\textwidth}
        \centering
        \includegraphics[height=1in]{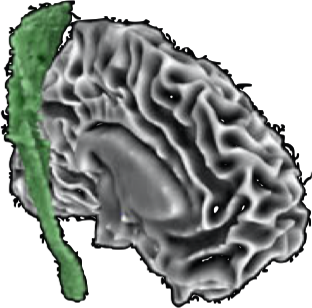}
        \caption{M1 Tract}
    \end{subfigure}%
    ~ 
    \begin{subfigure}[t]{0.2\textwidth}
        \centering
        \includegraphics[height=1in]{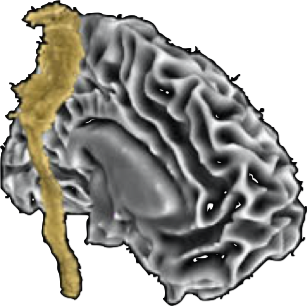}
        \caption{PMd Tract}
    \end{subfigure}
    ~
    \begin{subfigure}[t]{0.2\textwidth}
        \centering
        \includegraphics[height=1in]{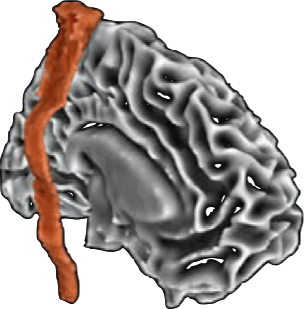}
        \caption{SMA Tract}
    \end{subfigure}
    \caption{SMATT \cite{Archer2017SMATTS} motor tract segmentation examples. }
    \label{M1_fig}
\end{figure*}

In this section, we apply the MVC-Net to a classification problem in the field of movement disorders, specifically, using diffusion magnetic resonance images (dMRIs) to classify Parkinson's disease (PD) patients from controls. 

\subsubsection*{\bf Diffusion MRI Data Acquisition and Pre-processing}

The dataset we use in this work consists of dMRIs acquired from 355 Parkinson's disease (PD) patients and 356 control (healthy) subjects. This data was acquired from a combination of three sources namely, (i) The University of Florida (UFL), (ii) The Parkinson’s Progression Markers Initiative (PPMI) database (\url{www.ppmi-info.org/data}) and (iii) The University of Michigan. The data acquired at UFL is publicly available for research use by request via the National Institute of Neurological Disorders (NINDS) Parkinson's Disease Biomarker Program (PDBP). This PDBP data contained images that were collected using a 3.0 T MR scanner and 32-channel quadrature volume head coil. The scanning parameters of the dMRIs acquisition sequence were as follows: gradient directions = $64$, b-values = $0/1000$ $s/\text{mm}^2$, resolution = $2$mm uniform voxel size.  The data from University of Michigan was obtained using a 3T Phillips MR scanner and the parameters were, gradient directions= $15$, b-values = $0/800$ $s/\text{mm}^2$, resolution = $1.75$mm uniform voxel size. Eddy current correction was applied to each data set by using standard motion correction techniques.

From each of these dMRIs, 12 regions of interest (ROIs) -- six on each hemisphere of the brain --  in the sensorimotor tract are segmented by registering to the sensorimotor area tract template (SMATT) \cite{Archer2017SMATTS}. These tracts are known to be affected by PD. Figure~\ref{M1_fig} depicts the M1, dorsal premotor cortex (PMd) and the supplementary motor area (SMA) tracts . In our experiments, we adopt the most widely used representation of dMRI in the clinic namely, diffusion tensor images. Diffusion tensors (DTs) are symmetric positive-definite matrices \cite{basser1994mr}. 

\subsubsection*{\bf Diffusion Tensor Representation}

The DTI representation of diffusion weighted images assumes a local Gaussian distribution of water diffusion within each voxel \cite{basser1994mr}. The covariance matrix of each local Gaussian represents the diffusion tensor, which is a symmetric positive definite (SPD) matrix. Thus we have a field $f : U \subset \mathbb{Z}^3 \to P_3$. We can equip the space $P_3$ with the $GL(3)$-invariant metric to make it a Riemannian homogeneous manifold. 

We estimate the diffusion tensor images from the segmented dMRIs of the sensorimotor tracts using the DiPy software \cite{DiPyPackage}. This data is fed directly into an MVC-net with five $\textbf{MVC} + \textbf{tReLU}$ layers. The output from the last of these layers forms the input to an $\textbf{MVFC}$ layer which maps this input into $\mathbb{R}^n$. Next, two standard fully connected layers are applied to this $\mathbb{R}^n$-valued input followed by a softmax function to output class probabilities. This architecture was found to give the best performance among similar architectures. 
\subsubsection*{\bf Classification Results}
We compared the performance of MVC-Net with 
several deep net architectures including the ManifoldNet \cite{chakraborty2019deep}, the ResNet-34 CNN architecture  and a CapsuleNet architecture with dynamic routing. To perform the comparison, we applied each of the aforementioned deep net architectures to the above described diffusion tensor image data sets. 

We train our MVC-net architecture for 200 epochs using cross-entropy loss and an Adam optimizer with learning rate set to $0.005$. We obtain a 10-fold cross validation accuracy of $97.8 \%$. For the ManifoldNet, we achieved  a 10-fold cross validation accuracy of $94.8 \%$. The ResNet-34 and CapsuleNet architectures are trained directly on the diffusion weighted images (without any diffusion tensor fitting to the dMRI data in the ROIs since they can not cope with symmetric positive definite matrix-valued images). With the ResNet-34 architecture we observe significant overfitting late in training and we utilize an early stoppage approach to report the best $10$-fold cross validation result, which still significantly under-performs the MVC-net and ManifoldNet (the only two approaches that respect the underlying geometry of $P_3$) respectively. Comprehensive results are reported in Table~\ref{results:diff}. 

As evident from the Table~\ref{results:diff}, MVC-net outperforms all other methods on both training and test accuracy while simultaneously keeping the lowest parameter count. The inference speed under-performs ResNet-34 and CapsuleNet, but these architectures utilize operations that have been optimized heavily for inference speed for years. Further, in terms of the possible application domain of automated Parkinson's diagnosis, the sub-second (less than a second) inference speeds we have achieved are more than sufficient in practice. 
\subsection{Anatomical Structure to Function Regression}

In this experiment we consider the problem of learning a function from a structural image of the human brain to a functional physiological measurement. Specifically, we consider the problem of mapping from Cauchy Deformation Tensor (CDT) images estimated relative to an atlas of diffusion MRI scans of the Substantia Nigra \cite{Banerjee2016}, a neuro-anatomical region known to be affected by movement disorders to MDS-UPDRS scores. The MDS-sponsored Revision of the Unified Parkinson's Disease Rating Scale (MDS-UPDRS) is a quantitative measure of PD severity assigned by a physician that combines various physical and psychological biomarkers associated with PD such as sleep quality, depression, and motor skills. The CDT of a diffusion MRI scan captures the deviation of a particular subject from a reference atlas (i.e. an "average" brain over the population), thus the CDT captures structural information about a particular brain.
\subsubsection*{\bf Data Acquisition and Pre-processing}

The data here consists of high angular resolution diffusion MRI (HARDI) \cite{TuchHARDI2002} images of 25 controls, 15 essential tremor (ET) patients and 26 PD patients acquired using the same parameters as the PDBP data in the previous experiment. For each patient we have corresponding MDS-UPDRS scores. We segment the Substantia Nigra (40 voxels large) from each of these images. Each image is pre-processed to estimate an Ensemble Average Propagator (EAP) at each voxel leading to an EAP field representation of the dMRI data. The EAP $P(\mathbf{x},\mathbf{r})$ is a probability distribution that describes the likelihood of water diffusing along a vector $\mathbf{r}$ \cite{johansen2013diffusion}.
To compute the CDT we follow a standard procedure which we outline here. First, we non-rigidly register \cite{cheng2009non} each of the EAP-field images to the Montreal Neurological Institute (MNI) reference atlas \cite{fonov2011unbiased}. Let $J$ be the Jacobian of the non-rigid registration, then the CDT at each voxel is given by $\sqrt{J J^T}$. This gives a $3 \times 3$ SPD matrix at each voxel, hence for each sample we have a $40 \times 3 \times 3$ dimensional tensor. Thus, to summarize, the independent variables are $40 \times 3 \times 3$ sized CDT fields describing structural properties of a particular human brain, and the dependent variables are the vector of MDS-UPDRS scores, quantifying functional severity of movement disorders.

We compare an MVC-net architecture operating on the space $P_3$ where the CDT descriptors live. The architecture for this problem consists of $3$ $\textbf{MVC} + \textbf{tReLU}$ layers followed by a $\textbf{MVFC}$ and two Euclidean fully connected layers plus a softmax layer. We compare the performance of the MVC-net to state-of-the-art methods for this task in \citet{chakraborty2019deep} and \citet{Banerjee2016}. The performance is quantified in terms of the $R^2$ statistic. Results are summarized in Table~\ref{regression}. 

\begin{figure}[!ht] \centering
    \begin{tabular}{|c|c|}
    \hhline{--}\rowcolor{tableheadcolor}
    
    {\bf Model} & {\bf $R^2$} \\ 
    \midtopline 
    
    \textbf{MVC-net} & $\highest{0.956}$   \\ 
    DTI-ManifoldNet \cite{chakraborty2019deep} & 0.930   \\ 
    NL Manifold Regression \cite{Banerjee2016} & 0.925 \\
    \bottomline
    \end{tabular} 
    \caption{\footnotesize Structure to Function Regression $R^2$-statistic.} 
    \label{regression} 
\end{figure}

As is evident from Table~\ref{regression}, MVC-net outperforms the competing methods on this particular task, although all methods perform well. Beyond this, MVC-net again achieves significant parameter efficiency, with $\sim 10 K$ parameters for this architecture. Future work will focus on evaluating MVC-net on this task for much larger datasets. 
\subsection{Video Classification}

We now outline an architecture for using MVC-net together with covariance blocks \cite{Yu2017SecondOrderCNN} to perform video classification. We present results of applying this MVC-net architecture to the Moving MNIST dataset, which is generated using the algorithm in \citet{Srivastava2015LSTM-Vid}. Each video consists of two MNIST digits moving across the frame. The velocity of both digits is fixed across all videos in a class, but the digits themselves  vary (in the range $0-9$). Different classes have different angles of motion, and the goal is to classify them based on this angle. 
\subsubsection*{\bf Architecture for Video Classification}
\begin{figure*}[!htb]
 \centering \scalebox{0.38}{
\hbox{ \includegraphics{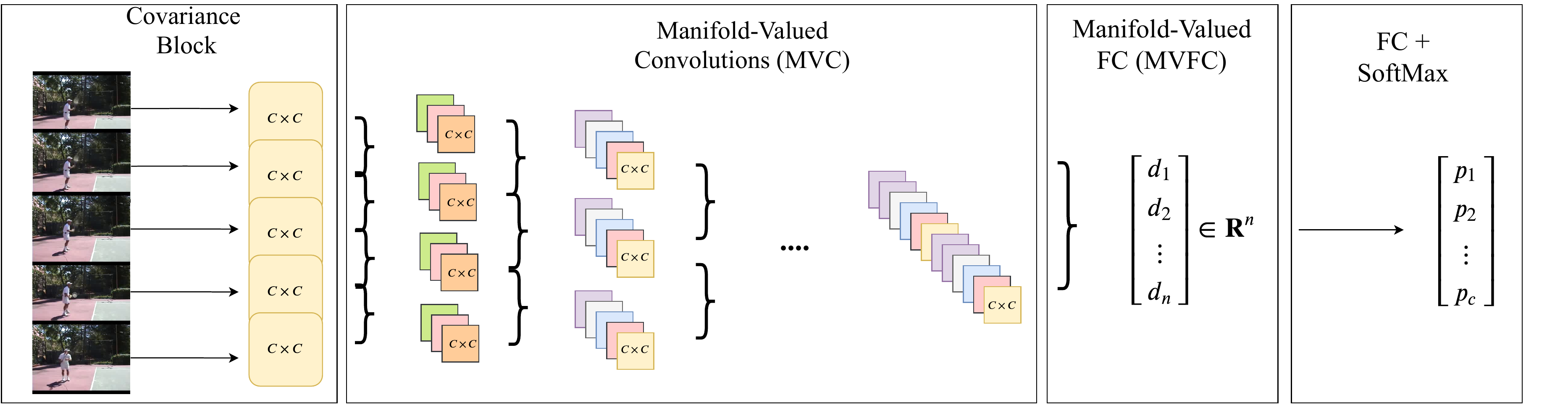}}}
\caption{MVC-net Video Classification Architecture}
\label{vid-class}
\end{figure*}

\begin{figure*}[!htb]  \centering \scalebox{0.9}{ \begin{tabular}{|c|c|c|c|c|c|}
    \topline\rowcolor{tableheadcolor} & & {\bf time (s)} & \multicolumn{3}{c}{{\bf
    orientation ($^\circ$)}} \\
    \arrayrulecolor{tableheadcolor}\hhline{---~~~}\arrayrulecolor{rulecolor}\hhline{~~~---}\rowcolor{tableheadcolor}
    \multirow{-2}{*}{\bf Mode} & \multirow{-2}{*}{\bf \# params.} & {\bf / epoch} &
    $30$-$60$ & $10$-$15$ & $10$-$15$-$20$ 
    \\ \midtopline MVC-Net & $13564$ & $\sim 4.1$ & $\highest{1.00 \pm 0.00}$ & $\highest{0.99 \pm 0.01}$  & $\highest{0.98
    \pm 0.01}$ \\ 
    Manifold DCNN & $1517$ & $\sim 4.3$ & $\highest{1.00 \pm 0.00}$ & $\highest{1.00 \pm 0.01}$  & $\highest{0.95
    \pm 0.01}$ \\
     SPD-TCN & $\highest{738}$ & $\sim 2.7$ & $\highest{1.00 \pm 0.00}$ & $\highest{0.99 \pm 0.01}$  & $\highest{0.97
    \pm 0.02}$ \\ 
    SPD-SRU & $1559$ & $\sim 6.2$ & $\highest{1.00\pm 0.00}$ & ${0.96 \pm
    0.02}$ & ${0.94 \pm 0.02}$ \\ 
    TT-GRU & $2240$ & $\sim\highest{2.0}$ & $\highest{1.00
    \pm 0.00}$ & $0.52 \pm 0.04$ & $0.47 \pm 0.03$ \\ 
    TT-LSTM & $2304$ &
    $\sim\highest{2.0}$ & $\highest{1.00 \pm 0.00}$ & $0.51 \pm 0.04$ & $0.37 \pm 0.02$ \\
    SRU & $159862$ & $\sim 3.5$ & $\highest{1.00 \pm 0.00}$ & $0.75 \pm 0.19$ & $0.73 \pm
    0.14$ \\ 
    LSTM & $252342$ & $\sim 4.5$ & $0.97 \pm 0.01$ & $0.71 \pm 0.07$ & $0.57 \pm
    0.13$ \\ 
    \bottomline \end{tabular} } \caption{\footnotesize Comparison results on
    Moving MNIST. All classification results are 10-fold cross validation test accuracy.} \label{video-res} 
    \end{figure*}
    
We now present an MVC-net architecture for video classification. Given an input video of dimensions $F \times 3 \times H \times W$, a covariance block \cite{Yu2017SecondOrderCNN} is applied in parallel to each frame to yield an $F \times (C+1) \times (C+1)$ tensor.
An illustration of the architecture is shown in Figure~\ref{vid-class}. We will now describe the components of this architecture.

For completeness, we will summarize the covariance block design from \citet{Yu2017SecondOrderCNN} below. The input to the covariance block is an image of size $3 \times H \times W$. We first apply a regular CNN without fully connected layers at the end to get a $C \times H' \times W'$ sized output. Now we interpret each channel as a feature vector and compute a $C \times C$ covariance matrix of the channel activations. Finally, to incorporate the first order statistics, we append the mean channel activation to both the last row and column of the covariance matrix to get a $(C + 1) \times (C+1)$ shaped output. 

As mentioned before, applying a covariance block at each frame of a video in parallel yields a $F \times (C+1) \times (C+1)$ shape tensor, where at each frame we have a $(C+1)\times (C+1)$ covariance matrix, which is an element in the space $P_{C+1}$. We now use a one-dimensional temporal MVC-net architecture to map the per-frame covariance descriptors to class outputs. This is no different than traditional temporal CNNs, i.e.\ at each layer, a moving window slides over the frames and computes a weighted combination. For our architecture, we use the manifold-valued convolution defined earlier on this sequence of frames each represented by a covariance matrix descriptor . Figure~\ref{vid-class} depicts a schematic of the MVC-net tailored for the video classification problem.

{\bf Experimental Results:}
For this experiment we use five $\textbf{MVC}+\textbf{tReLU}$ layers, followed by an \textbf{MVFC} layer and two Euclidean fully connected layers and a softmax. We use an Adam optimizer with learning rate set to $0.005$ and train for $300$ epochs using the cross-entropy loss. 10-fold cross validation results are summarized in Table~\ref{video-res}. As evident, the MVC-net either outperforms or is competitive with all competing methods in terms of test accuracy.

\section{Conclusion} \label{sec:conclusion}
In this paper, we presented a generalization of CNNs to manifold-valued images i.e., images whose value sets lie in Riemannian manifolds. Such data are commonly encountered in many applications including but not limited to medical imaging and computer vision. We defined the the analog of the traditional convolution operation to manifold-valued images and proved that it is equivariant to the isometry group actions admitted by the manifold. Equivariance is a fundamental design principle in traditional CNNs that affords weight sharing in the deep neural networks. Further, we also proved that a multi-layer MVC-Net requires the use of nonlinear activation functions and proposed a tangent-ReLU (tReLU) to this end. The final layer of the MVC-net is the manifold-valued fully connected layer whose construction is adopted from \citet{chakraborty2019deep}. Finally, we presented several experiments demonstrating the performance of the MVC-Net on classification problems drawn from medical imaging and computer vision. Comparisons to state-of-the art was presented demonstrating comparable to superior performance of the MVC-Net in terms of classification accuracy, parameter and time/epoch efficiency of the MVC-Net.  

\bibliography{references}

\begin{thebibliography}{27}
\providecommand{\natexlab}[1]{#1}
\providecommand{\url}[1]{\texttt{#1}}
\expandafter\ifx\csname urlstyle\endcsname\relax
  \providecommand{\doi}[1]{doi: #1}\else
  \providecommand{\doi}{doi: \begingroup \urlstyle{rm}\Url}\fi

\bibitem[Afsari(2011)]{Afsari2011}
Afsari, B.
\newblock {Riemannian ${L}^p$ center of mass: Existence, uniqueness, and
  convexity}.
\newblock \emph{Proceedings of the American Mathematical Society}, 139\penalty0
  (02):\penalty0 655--655, 2011.
\newblock ISSN 0002-9939.
\newblock \doi{10.1090/S0002-9939-2010-10541-5}.

\bibitem[Archer et~al.(2017)Archer, Vaillancourt, and
  Coombes]{Archer2017SMATTS}
Archer, D., Vaillancourt, D., and Coombes, S.
\newblock A template and probabilistic atlas of the human sensorimotor tracts
  using diffusion mri.
\newblock \emph{Cerebral Cortex}, 28:\penalty0 1--15, 03 2017.
\newblock \doi{10.1093/cercor/bhx066}.

\bibitem[Banerjee et~al.(2016)Banerjee, Chakraborty, Ofori, Okun, Viallancourt,
  and Vemuri]{Banerjee2016}
Banerjee, M., Chakraborty, R., Ofori, E., Okun, M.~S., Viallancourt, D.~E., and
  Vemuri, B.~C.
\newblock {A nonlinear regression technique for manifold valued data with
  applications to medical image analysis}.
\newblock In \emph{Proceedings of the IEEE Conference on Computer Vision and
  Pattern Recognition}, pp.\  4424--4432, 2016.

\bibitem[Banerjee et~al.(2019)Banerjee, Chakraborty, Archer, Vaillancourt, and
  Vemuri]{banerjee2019dmr}
Banerjee, M., Chakraborty, R., Archer, D., Vaillancourt, D., and Vemuri, B.~C.
\newblock Dmr-cnn: A cnn tailored for dmr scans with applications to pd
  classification.
\newblock In \emph{2019 IEEE 16th International Symposium on Biomedical Imaging
  (ISBI 2019)}, pp.\  388--391. IEEE, 2019.

\bibitem[Basser et~al.(1994)Basser, Mattiello, and LeBihan]{basser1994mr}
Basser, P.~J., Mattiello, J., and LeBihan, D.
\newblock Mr diffusion tensor spectroscopy and imaging.
\newblock \emph{Biophysical journal}, 66\penalty0 (1):\penalty0 259--267, 1994.

\bibitem[Bronstein et~al.(2017)Bronstein, Bruna, LeCun, Szlam, and
  Vandergheynst]{bronstein2017geometric}
Bronstein, M.~M., Bruna, J., LeCun, Y., Szlam, A., and Vandergheynst, P.
\newblock {Geometric deep learning: going beyond euclidean data}.
\newblock \emph{IEEE Signal Processing Magazine}, 34\penalty0 (4):\penalty0
  18--42, 2017.

\bibitem[Brooks et~al.(2019)Brooks, Schwander, Barbaresco, Schneider, and
  Cord]{brooks2019riemannian}
Brooks, D., Schwander, O., Barbaresco, F., Schneider, J.-Y., and Cord, M.
\newblock Riemannian batch normalization for spd neural networks.
\newblock \emph{arXiv preprint arXiv:1909.02414}, 2019.

\bibitem[Chakraborty et~al.(2019)Chakraborty, Bouza, Manton, and
  Vemuri]{chakraborty2019deep}
Chakraborty, R., Bouza, J., Manton, J., and Vemuri, B.~C.
\newblock A deep neural network for manifold-valued data with applications to
  neuroimaging.
\newblock In \emph{International Conference on Information Processing in
  Medical Imaging}, pp.\  112--124. Springer, 2019.

\bibitem[Cheng et~al.(2009)Cheng, Vemuri, Carney, and Mareci]{cheng2009non}
Cheng, G., Vemuri, B.~C., Carney, P.~R., and Mareci, T.~H.
\newblock Non-rigid registration of high angular resolution diffusion images
  represented by gaussian mixture fields.
\newblock In \emph{International Conference on Medical Image Computing and
  Computer-Assisted Intervention}, pp.\  190--197. Springer, 2009.

\bibitem[Cohen et~al.(2018)Cohen, Geiger, K{\"{o}}hler, and
  Welling]{cohen2018spherical}
Cohen, T.~S., Geiger, M., K{\"{o}}hler, J., and Welling, M.
\newblock {Spherical CNNs}.
\newblock \emph{arXiv preprint arXiv:1801.10130}, 2018.

\bibitem[Esteves et~al.(2018)Esteves, Allen-Blanchette, Makadia, and
  Daniilidis]{esteves2018learning}
Esteves, C., Allen-Blanchette, C., Makadia, A., and Daniilidis, K.
\newblock Learning so (3) equivariant representations with spherical cnns.
\newblock In \emph{Proceedings of the European Conference on Computer Vision
  (ECCV)}, pp.\  52--68, 2018.

\bibitem[Fonov et~al.(2011)Fonov, Evans, Botteron, Almli, McKinstry, Collins,
  and Group]{fonov2011unbiased}
Fonov, V., Evans, A.~C., Botteron, K., Almli, C.~R., McKinstry, R.~C., Collins,
  D.~L., and Group, B. D.~C.
\newblock Unbiased average age-appropriate atlases for pediatric studies.
\newblock \emph{Neuroimage}, 54\penalty0 (1):\penalty0 313--327, 2011.

\bibitem[Garyfallidis et~al.(2014)Garyfallidis, Brett, Amirbekian, Rokem, Van
  Der~Walt, Descoteaux, and Nimmo-Smith]{DiPyPackage}
Garyfallidis, E., Brett, M., Amirbekian, B., Rokem, A., Van Der~Walt, S.,
  Descoteaux, M., and Nimmo-Smith, I.
\newblock Dipy, a library for the analysis of diffusion mri data.
\newblock \emph{Frontiers in Neuroinformatics}, 8:\penalty0 8, 2014.
\newblock ISSN 1662-5196.
\newblock \doi{10.3389/fninf.2014.00008}.

\bibitem[Groisser(2004)]{Groisser2004}
Groisser, D.
\newblock {Newton's method, zeroes of vector fields, and the Riemannian center
  of mass}.
\newblock \emph{Advances in Applied Mathematics}, 33\penalty0 (1):\penalty0
  95--135, 2004.
\newblock ISSN 01968858.
\newblock \doi{10.1016/j.aam.2003.08.003}.

\bibitem[Huang \& {Van Gool}(2017)Huang and {Van Gool}]{huang2017riemannian}
Huang, Z. and {Van Gool}, L.~J.
\newblock {A Riemannian Network for SPD Matrix Learning.}
\newblock In \emph{AAAI}, volume~1, pp.\ ~3, 2017.

\bibitem[Huang et~al.(2018)Huang, Wu, and Van~Gool]{huang2018building}
Huang, Z., Wu, J., and Van~Gool, L.
\newblock Building deep networks on grassmann manifolds.
\newblock In \emph{Thirty-Second AAAI Conference on Artificial Intelligence},
  2018.

\bibitem[Johansen-Berg \& Behrens(2013)Johansen-Berg and
  Behrens]{johansen2013diffusion}
Johansen-Berg, H. and Behrens, T.~E.
\newblock \emph{Diffusion MRI: from quantitative measurement to in vivo
  neuroanatomy}.
\newblock Academic Press, 2013.

\bibitem[Kondor \& Trivedi(2018)Kondor and Trivedi]{kondor2018generalization}
Kondor, R. and Trivedi, S.
\newblock {On the generalization of equivariance and convolution in neural
  networks to the action of compact groups}.
\newblock \emph{arXiv preprint arXiv:1802.03690}, 2018.

\bibitem[Kondor et~al.(2018)Kondor, Lin, and Trivedi]{kondor2018clebsch}
Kondor, R., Lin, Z., and Trivedi, S.
\newblock Clebsch--gordan nets: a fully fourier space spherical convolutional
  neural network.
\newblock In \emph{Advances in Neural Information Processing Systems}, pp.\
  10117--10126, 2018.

\bibitem[Lee(2006)]{lee2006riemannian}
Lee, J.~M.
\newblock \emph{Riemannian manifolds: an introduction to curvature}, volume
  176.
\newblock Springer Science \& Business Media, 2006.

\bibitem[Mallat(2016)]{Mallat2016}
Mallat, S.
\newblock {Understanding Deep Convolutional Networks}.
\newblock \emph{Philosophical Transactions A}, 374:\penalty0 20150203, 2016.
\newblock ISSN 1364503X.
\newblock \doi{10.1098/rsta.2015.0203}.

\bibitem[Masci et~al.(2015)Masci, Boscaini, Bronstein, and
  Vandergheynst]{masci2015geodesic}
Masci, J., Boscaini, D., Bronstein, M., and Vandergheynst, P.
\newblock {Geodesic convolutional neural networks on riemannian manifolds}.
\newblock In \emph{Proc. of the {IEEE} Intl. Conf. on Computer Vision
  workshops}, pp.\  37--45, 2015.

\bibitem[{Maurice Fr{\'{e}}chet}(1948)]{Frechet1948a}
{Maurice Fr{\'{e}}chet}.
\newblock {Les {\'{e}}l{\'{e}}ments al{\'{e}}atoires de nature quelconque dans
  un espace distanci{\'{e}}}.
\newblock \emph{Annales de l'I. H. P.,}, 10\penalty0 (4):\penalty0 215--310,
  1948.

\bibitem[Poulenard \& Ovsjanikov(2018)Poulenard and
  Ovsjanikov]{poulenard2018multi}
Poulenard, A. and Ovsjanikov, M.
\newblock Multi-directional geodesic neural networks via equivariant
  convolution.
\newblock In \emph{SIGGRAPH Asia 2018 Technical Papers}, pp.\  236. ACM, 2018.

\bibitem[Srivastava et~al.(2015)Srivastava, Mansimov, and
  Salakhutdinov]{Srivastava2015LSTM-Vid}
Srivastava, N., Mansimov, E., and Salakhutdinov, R.
\newblock Unsupervised learning of video representations using lstms.
\newblock In \emph{Proceedings of the 32Nd International Conference on
  International Conference on Machine Learning - Volume 37}, ICML'15, pp.\
  843--852. JMLR.org, 2015.

\bibitem[Tuch et~al.(2002)Tuch, Reese, Wiegell, Makris, Belliveau, and
  Wedeen]{TuchHARDI2002}
Tuch, D.~S., Reese, T.~G., Wiegell, M.~R., Makris, N., Belliveau, J.~W., and
  Wedeen, V.~J.
\newblock High angular resolution diffusion imaging reveals intravoxel white
  matter fiber heterogeneity.
\newblock \emph{Magnetic Resonance in Medicine}, 48\penalty0 (4):\penalty0
  577--582, 2002.
\newblock \doi{10.1002/mrm.10268}.

\bibitem[{Yu} \& {Salzmann}(2017){Yu} and {Salzmann}]{Yu2017SecondOrderCNN}
{Yu}, K. and {Salzmann}, M.
\newblock {Second-order Convolutional Neural Networks}.
\newblock \emph{ArXiv e-prints}, March 2017.

\end{thebibliography}
\bibliographystyle{icml2020}

\end{document}